\documentclass[manuscript,screen]{acmart}

\acmJournal{TIST}
\acmYear{2026}
\acmVolume{0}
\acmNumber{0}
\acmArticle{0}
\acmMonth{0}
\acmDOI{}                            
\setcopyright{none}                  

\newtheorem{assumption}{Assumption}

\begin{document}

\title{Agentic Governance and Adversarial Verification for Policy-Constrained LLM Healthcare Appeal Generation}

\author{Harshil Lodhiya}
\orcid{0009-0005-5798-9176}
\affiliation{%
  \position{Chief Software Architect}
  \institution{Sliced Health}
  \city{Woodstock}
  \state{GA}
  \country{United States}}
\email{hlodhiya@slicedhealth.com}

\author{Alex McManus}
\orcid{0009-0002-8292-0793}
\affiliation{%
  \position{Chief Technology Officer}
  \institution{Sliced Health}
  \city{Woodstock}
  \state{GA}
  \country{United States}}
\email{amcmanus@slicedhealth.com}

\author{Reese Walker}
\affiliation{%
  \position{Chief Product Officer}
  \institution{Sliced Health}
  \city{Woodstock}
  \state{GA}
  \country{United States}}
\email{rwalker@slicedhealth.com}

\renewcommand{\shortauthors}{Lodhiya et al.}

\begin{abstract}
Claim denial management is one of the costliest operational bottlenecks in United States healthcare, driving an estimated \$260 billion per year in administrative overhead and delayed reimbursements~\cite{morse2024denials}. Large Language Models (LLMs) and Retrieval-Augmented Generation (RAG) can produce fluent clinical text, but single-agent architectures are poorly matched to high-stakes administrative healthcare: they may introduce unsupported clinical details, lose the logical structure of hierarchical payer policy, and provide only weak post-hoc evidence that generated assertions are admissible.

We propose the Agentic Governance and Adversarial Verification Framework (AGVF)---a multi-agent architecture for medical-necessity appeal generation under explicit policy and evidence constraints. AGVF models appeal synthesis as a Constrained Markov Decision Process (CMDP) over five agent roles: policy formalization, evidence retrieval, gap analysis, adversarial critique, and gated synthesis.

We prove that, under explicit assumptions, refinement over a fixed policy constraint graph monotonically reduces a formal evidence-deficiency potential and terminates either with a complete satisfying policy frontier or with a localized residual evidence gap. We also define a deterministic citation-grounding gate that prevents assertions without admissible evidence citations from entering shared state. We provide a dependency-light reference implementation and validate it on 1{,}000 synthetic appeal cases whose diagnosis, procedure, and charge distributions are parameterized from aggregate, de-identified public hospital-discharge data. The validation confirms implementation fidelity for the formal invariants: zero citation-grounding violations across all AGVF cases and monotone deficiency reduction in every episode; ablating the deterministic gate raises the grounding-violation rate to 100\%. The study uses no real patient records and does not measure clinical efficacy or payer-overturn outcomes. AGVF thus contributes a theory-backed agentic architecture, a controlled synthetic benchmark, and a verified reference implementation for policy-constrained LLM generation in a high-stakes healthcare workflow.
\end{abstract}

\begin{CCSXML}
<ccs2012>
   <concept>
       <concept_id>10010147.10010178.10010219.10010220</concept_id>
       <concept_desc>Computing methodologies~Multi-agent systems</concept_desc>
       <concept_significance>500</concept_significance>
       </concept>
   <concept>
       <concept_id>10010147.10010178.10010219.10010221</concept_id>
       <concept_desc>Computing methodologies~Intelligent agents</concept_desc>
       <concept_significance>500</concept_significance>
       </concept>
   <concept>
       <concept_id>10010147.10010178.10010187</concept_id>
       <concept_desc>Computing methodologies~Knowledge representation and reasoning</concept_desc>
       <concept_significance>300</concept_significance>
       </concept>
   <concept>
       <concept_id>10002951.10003317.10003338</concept_id>
       <concept_desc>Information systems~Retrieval models and ranking</concept_desc>
       <concept_significance>300</concept_significance>
       </concept>
   <concept>
       <concept_id>10010405.10010444.10010449</concept_id>
       <concept_desc>Applied computing~Health informatics</concept_desc>
       <concept_significance>300</concept_significance>
       </concept>
 </ccs2012>
\end{CCSXML}
\ccsdesc[500]{Computing methodologies~Multi-agent systems}
\ccsdesc[500]{Computing methodologies~Intelligent agents}
\ccsdesc[300]{Computing methodologies~Knowledge representation and reasoning}
\ccsdesc[300]{Information systems~Retrieval models and ranking}
\ccsdesc[300]{Applied computing~Health informatics}

\keywords{agentic AI, multi-agent systems, Retrieval-Augmented Generation, healthcare claim denials, medical necessity appeals, Constrained Markov Decision Process, adversarial verification, citation grounding, policy-constrained generation}

\maketitle

\begin{center}
  \textit{This manuscript is under review at ACM Transactions on Intelligent Systems and Technology (TIST).}
\end{center}

\section{Introduction}

\subsection{Background and Domain Challenges}
The financial viability of United States healthcare organizations depends in large part on navigating complex and often adversarial claim adjudication systems. Health insurers process approximately three billion medical claims each year, of which between 9\% and 15\% are initially denied upon submission~\cite{premier2025adjudication,premier2025trend}. Claim denials fall into several distinct categories---ranging from administrative formatting errors to substantive clinical disagreements regarding Medical Necessity (e.g., Claim Adjustment Reason Code CO-50) and Prior Authorization Non-Compliance (e.g., CO-197).

Overturning an improper medical necessity denial requires a clinical appeal package that synthesizes three disparate, non-standardized knowledge streams:
\begin{itemize}
  \item \textbf{Unstructured EHR Data:} Longitudinal patient charts, laboratory panels, imaging reports, nursing notes, and physician attestations.
  \item \textbf{Complex Payer Policy Logic:} Dynamic, hierarchical coverage rules comprising CMS National Coverage Determinations (NCDs), Local Coverage Determinations (LCDs), InterQual criteria, MCG guidelines, and commercial payer-specific policy bulletins.
  \item \textbf{Regulatory Compliance Frameworks:} Strict timelines and statutory requirements mandated by ERISA, CMS, and state-level insurance commissioner regulations.
\end{itemize}

Today, appeals are prepared manually by specialized nurses and physician advisors. The cost of manually processing a single complex appeal ranges from roughly \$25 to upward of \$120 per claim, depending on complexity and the clinical labor involved~\cite{premier2025trend,acdis_changehealthcare,cofactor2025appeals}. Due to resource constraints, healthcare organizations appeal fewer than 50\% of eligible denied claims, absorbing significant uncompensated care costs~\cite{cofactor2025appeals}.

\subsection{Failure Modes of Prevalent AI Approaches}
Large Language Models (LLMs) have drawn attention as a path toward automating appeal generation. In practice, however, deploying single-prompt LLMs or basic Retrieval-Augmented Generation (RAG) pipelines in this setting exposes several structural failure modes (Figure~\ref{fig:naive}).

\begin{figure}[tbp]
  \centering
  \includegraphics[width=\linewidth]{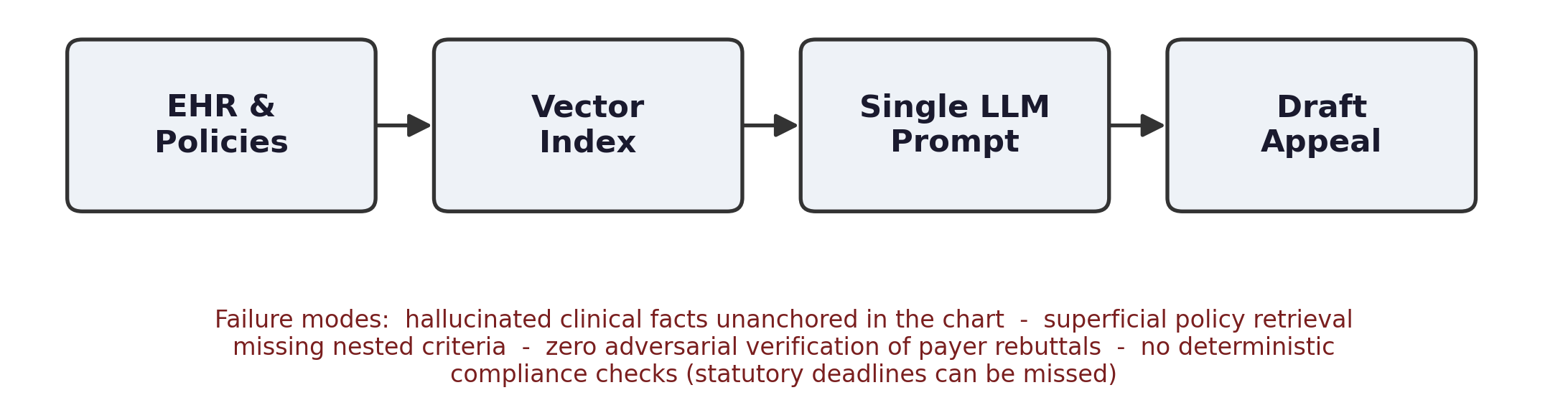}
  \caption{Naive single-agent RAG pipeline and its structural failure modes.}
  \Description{Block diagram of a naive retrieval-augmented generation pipeline showing how retrieval, generation, and post-hoc review can lead to hallucinated claims, fragmented policy context, weak adversarial review, and limited auditability.}
  \label{fig:naive}
\end{figure}

\noindent\textbf{Epistemic Hallucination and Unanchored Claims.} Generative models can extrapolate clinical details (e.g., asserting a patient failed a conservative therapy that was never documented), exposing providers to legal liability and fraud allegations under the False Claims Act.

\noindent\textbf{Context Fragmentation in Policy Retrieval.} Payer policies contain nested conditional logic (e.g., ``step therapy required with Class A drug for six weeks unless contraindicated by eGFR less than 30~mL/min''). Standard semantic retrieval alone may fail to preserve these logical dependencies, retrieving fragments that are topically relevant but procedurally insufficient.

\noindent\textbf{Lack of Adversarial Foresight.} A standard text generator creates passive summaries. It fails to anticipate the rebuttal strategies employed by insurance company medical directors trained to deny appeals based on minor evidentiary omissions.

\noindent\textbf{Absence of Deterministic Audit Trails.} Healthcare compliance requires non-probabilistic controls over what enters payer-facing text. Stochastic LLM outputs alone cannot guarantee that each generated assertion is tied to an admissible evidence source.

\paragraph{Evaluation constraint.} Real denial packets combine protected health information, payer contracts, and operationally sensitive adjudication outcomes, making public benchmarking difficult. We therefore evaluate AGVF on a controlled synthetic benchmark whose surface distributions are parameterized from public de-identified discharge aggregates, while its evidence obligations and oracle labels are generated synthetically. This design does not establish clinical efficacy; rather, it tests whether the agentic control structure enforces the formal invariants it claims to enforce under reproducible conditions.

\subsection{Theoretical Formulation: Appeal Optimization as a CMDP}
To close the gap between what generative models can produce and what regulatory compliance demands, we formulate healthcare appeal management as a Constrained Markov Decision Process (CMDP) defined by the six-tuple:
\begin{equation}
  \mathcal{M} = \left( \mathcal{S}, \mathcal{A}, \mathcal{P}, \mathcal{R}, \mathcal{C}, \gamma \right)
\end{equation}
where $\mathcal{S}$ represents the state space of the appeal dossier, containing the set of verified patient facts $F \subset \mathcal{F}_P$, extracted policy constraints $K \subset \mathcal{K}_I$, and the current draft arguments; $\mathcal{A}$ is the action space executed by specialized software agents (e.g., querying evidence pools, synthesizing counter-arguments, applying deterministic gates); $\mathcal{P}: \mathcal{S} \times \mathcal{A} \to \Delta(\mathcal{S})$ defines the state transition dynamics across the agentic execution trajectory; and $\mathcal{R}(\cdot)$ is the objective reward function estimating the probability of claim overturn:
\begin{equation}
  \mathcal{R}(s) = P(\text{Overturn} \mid s).
\end{equation}
$\mathcal{C}: \mathcal{S} \times \mathcal{A} \to \mathbb{R}^{k}$ defines a set of $k$ admissibility costs, including unsupported-assertion and regulatory-template metrics, and $\gamma \in (0,1]$ is the discount factor over iterative refinement steps.

The idealized goal is a cooperative policy $\pi^{*}$ that solves:
\begin{equation}
  \max_{\pi} \; \mathbb{E}_{\tau \sim \pi} \!\left[ \sum_{t=0}^{T} \gamma^{t} \mathcal{R}(s_t, a_t) \right]
  \quad \text{s.t.} \quad
  \mathbb{E}_{\tau \sim \pi} \!\left[ \sum_{t=0}^{T} \mathcal{C}_i(s_t, a_t) \right] \le d_i \;\; \forall i \in \{1,\dots,k\}
\end{equation}
where $d_i$ represents the maximum allowable threshold for regulatory omissions ($d_{\text{regulatory}}=0$ when a concrete checklist is specified) and unsupported generated assertions ($d_{\text{unsupported}}=0$).

\subsection{Proposed Agentic Architecture: AGVF}
To solve this CMDP we introduce AGVF, which decomposes the problem into a multi-agent graph where each agent has a specialized role (Figure~\ref{fig:agvf}).

\begin{figure}[tbp]
  \centering
  \includegraphics[width=0.72\linewidth]{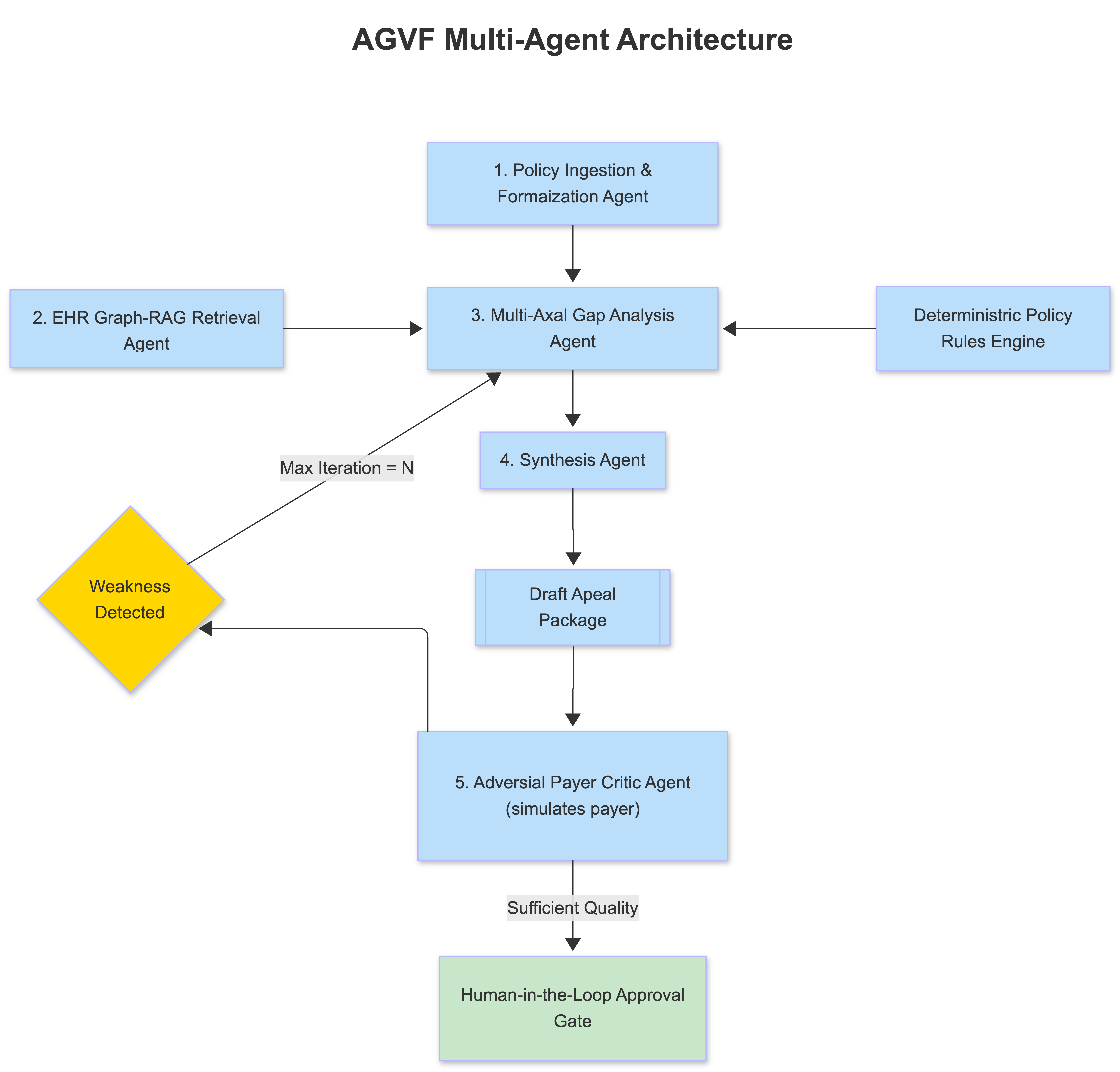}
  \caption{AGVF multi-agent architecture and adversarial refinement loop.}
  \Description{Architecture diagram showing policy ingestion, evidence retrieval, gap analysis, adversarial critique, and gated synthesis coordinated through an iterative refinement loop.}
  \label{fig:agvf}
\end{figure}

\noindent\textbf{Policy Ingestion and Formalization Agent.} Represents payer rules as an executable policy constraint graph $G_K=(V_K,E_K)$ with stable identifiers for atomic obligations. In the reference implementation these graphs are hand-authored JSON artifacts; automated policy-to-graph ingestion is future work.

\noindent\textbf{Evidence Retrieval Agent.} Queries a case evidence pool for facts corresponding to currently open policy obligations. The reference implementation uses exact-key retrieval for reproducible synthetic evaluation and exposes a semantic-retrieval interface for real LLM-backed experiments.

\noindent\textbf{Multi-Axial Gap Analysis Agent.} Executes formal logic matching between $G_K$ and patient evidence $F_P$ to compute missing clinical prerequisites or identify diagnostic equivalents.

\noindent\textbf{Adversarial Payer Critic Agent.} Evaluates the draft from an opposing-adjudicator perspective and returns obligations that remain contestable. In the deterministic validation reported here, this role is instantiated as a sound gap critic; measuring a real LLM critic is an explicit extension.

\noindent\textbf{Gated Synthesis Agent.} Assembles draft assertions only after deterministic citation-grounding checks, then hands the residual evidence gaps and draft assertions to a Human-in-the-Loop (HITL) reviewer.

\subsection{Key Contributions of This Paper}
This paper makes five contributions:
\begin{itemize}
  \item \textbf{Theoretical Framework.} We formalize healthcare appeal generation as a CMDP under hard evidence-admissibility constraints, moving beyond heuristic prompt engineering toward bounded multi-agent refinement.
  \item \textbf{Agentic Architecture (AGVF).} We present an agentic control architecture that separates policy representation, evidence retrieval, formal gap analysis, adversarial critique, and gated synthesis.
  \item \textbf{Convergence Analysis.} We prove that refinement over a static policy graph with append-only evidence monotonically reduces a formal deficiency potential and terminates in either a complete satisfying frontier or a residual evidence gap.
  \item \textbf{Synthetic Benchmark.} We construct a SPARCS-parameterized synthetic appeal benchmark with hand-formalized policy graphs and oracle obligation-satisfaction labels, enabling reproducible evaluation without exposing patient records.
  \item \textbf{Reference Implementation and Validation.} We provide a dependency-light reference implementation and validate on 1{,}000 synthetic cases that the implementation upholds its formal invariants: zero citation-grounding violations and monotone convergence in all AGVF episodes. An ablation confirms the deterministic citation gate is load-bearing (its removal drives the grounding-violation rate to 100\%). Clinical efficacy evaluation on clinician-adjudicated data is left to future work.
\end{itemize}

\subsection{Structure of the Remainder of the Paper}
Section~2 reviews related work. Section~3 develops the formal theory---constraint graphs, the refinement protocol, and the convergence and grounding proofs. Section~4 describes the system implementation. Section~5 validates the reference implementation on synthetic cases. Section~6 discusses limitations and future directions.

\section{Related Work}
We situate AGVF relative to five areas: retrieval, multi-agent reasoning, constrained decision-making, iterative self-correction, and clinical NLP.

\subsection{Retrieval-Augmented and Graph-Structured Retrieval}
Retrieval-Augmented Generation pairs a language model with a retriever so that outputs can cite an external corpus~\cite{lewis2020rag}. Standard chunk-based implementations work poorly on the long, deeply nested documents typical of payer coverage policy. RAPTOR addresses this by recursively clustering and summarizing text into a tree that supports retrieval at multiple granularities~\cite{sarthi2024raptor}; Graph-RAG goes further and structures the corpus as a traversable knowledge graph~\cite{edge2024graphrag}. AGVF is compatible with hierarchical and graph-structured retrieval, but the reference implementation deliberately uses exact-key synthetic retrieval so that the theoretical invariants can be evaluated without conflating them with retriever quality.

\subsection{Multi-Agent and Debate-Style LLM Reasoning}
Having multiple LLM instances debate over several rounds improves factual accuracy~\cite{du2023multiagent}, and Tree-of-Thought search lets a model explore alternative reasoning paths before committing~\cite{yao2023tot}. Neither method, however, offers convergence or compliance guarantees; even the debate literature acknowledges that free-form debate may not converge. AGVF restricts critique to a fixed policy-constraint board and pairs refinement with deterministic gates, which is what produces the guarantees in Section~\ref{sec:theory}. The present experiments instantiate the critic deterministically; LLM-based adversarial critique is part of the system design and future empirical extension.

\subsection{Constrained Sequential Decision-Making}
A Constrained Markov Decision Process~\cite{altman1999cmdp} separates a reward to maximize (here, policy coverage as a surrogate for overturn probability) from hard costs to bound (unsupported assertions and regulatory omissions). What AGVF adds is that selected admissibility constraints are \emph{deterministically} enforced by gates rather than left as expected penalties.

\subsection{Self-Correction and Iterative Refinement in LLMs}
A parallel line of work improves LLM outputs through iterative self-critique rather than a single forward pass. Self-Refine has a single model generate, critique, and revise its own output in a feedback loop, improving quality without additional training~\cite{madaan2023selfrefine}. Reflexion equips language agents with verbal self-reflection stored in episodic memory to learn from prior failures~\cite{shinn2023reflexion}. These methods establish that a generate--critique--revise loop measurably improves LLM output, which motivates AGVF's refinement structure. AGVF differs in two decisive respects. First, its critique is constrained to explicit policy obligations rather than open-ended preference feedback. Second, self-refinement methods offer no termination or admissibility guarantee: the loop may oscillate, plateau, or ``improve'' toward a fluent but unsupported output. AGVF replaces open-ended self-feedback with refinement bounded by a formal constraint graph and a deficiency potential, yielding the monotone convergence of Theorem~\ref{thm:term}, and replaces soft self-critique with a deterministic citation-grounding gate, yielding the admissibility invariant of Lemma~\ref{lem:ground}.

\subsection{LLMs for Clinical and Administrative Healthcare Text}
LLMs are increasingly applied to clinical text---question answering, documentation drafting, coding assistance---but the central concern remains factual reliability. In a regulatory setting, a fluent but unsupported statement is not a minor error; it is a liability that can implicate fraud statutes when it enters a payer-facing document. Most deployed systems address this through post-hoc human review. AGVF takes a different approach: admissibility is checked on the write path. No assertion enters the draft unless it cites verified evidence in the record, so the human reviewer audits a document whose claims are citation-grounded by construction. To our knowledge, framing medical-necessity appeal generation as constrained multi-agent refinement over a policy graph with deterministic admissibility gates is novel to this work.

\section{Theoretical Methodology}\label{sec:theory}
This section develops the formal core of AGVF: the objects agents manipulate (\S\ref{ssec:prelim}), the interaction protocol (\S\ref{ssec:protocol}), the refinement game (\S\ref{ssec:game}), and the two invariants that separate AGVF from a naive pipeline---monotone deficiency reduction (\S\ref{ssec:conv}) and deterministic citation grounding (\S\ref{ssec:zero})---followed by a constrained optimality characterization (\S\ref{ssec:opt}).

\subsection{Formal Preliminaries and Notation}\label{ssec:prelim}
\paragraph{Policy constraint graph.} The Policy Ingestion and Formalization Agent compiles, or in the reference implementation loads, a payer coverage policy (an LCD, NCD, or commercial bulletin) into a \emph{policy constraint graph} $G_K=(V_K,E_K)$. Each leaf node $v \in L(V_K) \subseteq V_K$ carries an atomic clinical or procedural predicate $p_v$ (e.g., ``documented eGFR $<30$~mL/min'' or ``$\ge 6$ weeks of Class-A step therapy recorded''). Internal nodes are logical connectives drawn from $\{\wedge,\vee,\neg\}$; the ``unless'' clauses of \S1.2 are encoded as guarded Boolean structure. A distinguished root $r \in V_K$ denotes the top-level medical-necessity criterion, so the graph defines a Boolean formula $\Phi_G$ over the leaf predicates.

\paragraph{Evidence and anchoring.} Let $\mathcal{F}_P$ be the finite set of \emph{verified} patient facts available to the episode; each $f \in \mathcal{F}_P$ has a source locator and is admissible only if it can be traced to a source document. An \emph{anchoring relation} $\varphi$ records admissible evidence: $(p_v,f)\in\varphi$ iff fact $f$ substantiates predicate $p_v$. For an evidence subset $F \subseteq \mathcal{F}_P$, the satisfaction indicator $\mathrm{sat}_F(v)$ equals $1$ for a leaf $v$ iff $\exists f \in F$ with $(p_v,f)\in\varphi$, extended to internal nodes by the Boolean semantics of their connective. Write $\mathrm{SAT}_F(G_K)=\mathrm{sat}_F(r)$.

\paragraph{Dossier state.} A dossier state $s=(F_s,G_K,\delta_s)\in\mathcal{S}$ bundles the evidence gathered so far $F_s \subseteq \mathcal{F}_P$, the fixed policy graph, and the current draft $\delta_s$. The draft is a finite set of \emph{assertions} $\{c_1,\dots,c_m\}$; each assertion $c$ carries its cited support $\varphi(c)\subseteq\mathcal{F}_P$.

\paragraph{Deficiency.} A \emph{satisfying frontier} is a minimal set $A \subseteq L(V_K)$ of leaves whose joint satisfaction forces $\mathrm{SAT}(G_K)=1$; let $\mathcal{A}(G_K)$ collect all such frontiers. With materiality weights $w_v>0$, the \emph{deficiency} of a state is the cost of the cheapest still-unmet frontier:
\begin{equation}\label{eq:deficiency}
  D(s) = \min_{A \in \mathcal{A}(G_K)} \sum_{v \in A \,:\, \mathrm{sat}_{F_s}(v)=0} w_v,
\end{equation}
with $D(s)=0$ exactly when $\mathrm{SAT}_{F_s}(G_K)=1$. Thus $D(s)$ measures how far current evidence is from establishing medical necessity along the cheapest viable argument.

\subsection{Multi-Agent Interaction Protocol}\label{ssec:protocol}
AGVF realizes the CMDP policy $\pi$ as a fixed composition of five operators on $\mathcal{S}$, iterated under the control of the Adversarial Payer Critic. Writing $s_t$ for the state at refinement round $t$:
\begin{itemize}
  \item \textbf{Ingestion} $\mathcal{I}$ (once, $t=0$): policy specification $\mapsto G_K$, establishing $\Phi_G$ as an executable Boolean constraint graph. The current implementation uses hand-formalized graphs; automated extraction is outside the reported validation.
  \item \textbf{Retrieval} $\mathcal{R}\colon (G_K,s_t)\mapsto F'$. The Evidence Retrieval Agent issues targeted queries for currently open leaves and returns newly verified facts $F'$. Hierarchical and graph-structured retrieval are compatible back-ends, but the synthetic validation uses exact-key retrieval over oracle-labeled evidence pools.
  \item \textbf{Gap analysis} $\mathcal{G}\colon s_t \mapsto \Delta(s_t)$, the open-obligation set on the cheapest frontier, i.e.\ the argmin frontier in \eqref{eq:deficiency} restricted to its unsatisfied leaves.
  \item \textbf{Synthesis} $\mathcal{S}_{\!yn}\colon s_t \mapsto \delta_{t+1}$, drafting assertions that cite anchored evidence for satisfied obligations, subject to the deterministic gate of \S\ref{ssec:zero}.
  \item \textbf{Adversarial critique} $\mathcal{K}\colon \delta_{t+1}\mapsto W_t \subseteq \Delta(s_t)$. The Payer Critic returns the subset of open obligations it can still contest. In a real LLM instantiation this operator may use role-specialized debate or Tree-of-Thought-style search~\cite{du2023multiagent,yao2023tot}; in the reported deterministic validation it is implemented as a sound gap critic.
\end{itemize}
The loop repeats $\mathcal{R}\to\mathcal{G}\to\mathcal{S}_{\!yn}\to\mathcal{K}$ while $W_t\neq\emptyset$ and $t<N$ (the max-iteration budget). Each round feeds the contested obligations $W_t$ back to Retrieval as targeted queries.

\subsection{Appeal Refinement as a Cooperative-Adversarial Game}\label{ssec:game}
Each round constitutes a two-player refinement game between the Synthesis Agent $P$ (proponent) and the Payer Critic $K$ (opponent). $P$ seeks a draft that anchors every obligation on some satisfying frontier; $K$ seeks any obligation it can contest. The stage payoff is the deficiency $D(s_t)$: $P$ minimizes it, $K$ exposes it. Unlike unconstrained debate---where convergence is not guaranteed~\cite{du2023multiagent}---AGVF's game is played on the fixed, finite board $G_K$ with a monotone evidence store. This structure forces non-increasing deficiency, as we now show.

\subsection{Monotone Convergence}\label{ssec:conv}
We make the operating assumptions explicit; Section~6 discusses relaxing them.
\begin{assumption}[Static, complete policy graph]\label{as:static}
Within an episode, $\mathcal{I}$ produces $G_K$ once and it does not change; $\Phi_G$ captures the operative criteria.
\end{assumption}
\begin{assumption}[Monotone evidence]\label{as:mono}
Retrieval only adds verified facts, $F_{t+1}\supseteq F_t$; a verified fact is never retracted mid-episode.
\end{assumption}
\begin{assumption}[Sound and complete critique]\label{as:sound}
The Critic never contests an already-anchored obligation and does not hide open obligations: $W_t=\Delta(s_t)$, and therefore $v\in W_t \Rightarrow \mathrm{sat}_{F_t}(v)=0$.
\end{assumption}

\begin{lemma}[Monotone deficiency]\label{lem:mono}
Under Assumptions~\ref{as:static}--\ref{as:sound}, $D(s_{t+1})\le D(s_t)$ for all $t$.
\end{lemma}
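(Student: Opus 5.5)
The plan is to compare the two minimization problems in \eqref{eq:deficiency} term by term. Under Assumption~\ref{as:static}, the graph $G_K$ is the same at rounds $t$ and $t+1$. So the index set $\mathcal{A}(G_K)$ of satisfying frontiers, and the weights $w_v$, are identical in $D(s_t)$ and $D(s_{t+1})$. The only thing that changes between rounds is the evidence set, from $F_t$ to $F_{t+1}$. The whole argument therefore reduces to showing that, for each fixed frontier $A$, the unmet cost $c_t(A)=\sum_{v\in A:\,\mathrm{sat}_{F_t}(v)=0} w_v$ cannot increase.

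\textbf{Step 1 (leaf monotonicity).} I first show that for every leaf $v\in L(V_K)$, $\mathrm{sat}_{F_t}(v)=1$ implies $\mathrm{sat}_{F_{t+1}}(v)=1$.
\begin{itemize}
  \item By definition, $\mathrm{sat}_{F_t}(v)=1$ means there is some $f\in F_t$ with $(p_v,f)\in\varphi$.
  \item By Assumption~\ref{as:mono}, $F_{t+1}\supseteq F_t$, so that same $f$ lies in $F_{t+1}$.
  \item The anchoring relation $\varphi$ depends only on $\mathcal{F}_P$ and not on the round.
\end{itemize}
Hence the set of unmet leaves $U_t=\{v\in L(V_K):\mathrm{sat}_{F_t}(v)=0\}$ satisfies $U_{t+1}\subseteq U_t$. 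Only leaf-level satisfaction enters $D$, because the sum ranges over leaves of a frontier. So non-monotone connectives such as $\neg$ at internal nodes play no role here; this is worth remarking explicitly.

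\textbf{Step 2 (per-frontier and min).} For each $A\in\mathcal{A}(G_K)$ we have $A\cap U_{t+1}\subseteq A\cap U_t$. Since all weights $w_v>0$, this gives $c_{t+1}(A)\le c_t(A)$. Now let $A^\ast$ be a minimizer for $D(s_t)$; one exists because the set of leaves is finite. Then
$D(s_{t+1})=\min_A c_{t+1}(A)\le c_{t+1}(A^\ast)\le c_t(A^\ast)=D(s_t)$.

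\textbf{Role of Assumption~\ref{as:sound}.} This assumption is not needed for the inequality itself. Its job is to make sure the loop's control flow is well-defined: with $W_t=\Delta(s_t)$, the rounds the protocol runs are exactly those with positive deficiency. I would note that it will matter later, for strict decrease and termination, rather than here.

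\textbf{Main obstacle.} The delicate point is the interaction between the definition of a satisfying frontier and negation. If a frontier's sufficiency could be destroyed by adding evidence (via a $\neg$ node), then $D(s)=0$ need not coincide with $\mathrm{SAT}$. Also, the frontier set would be static but its meaning would not be.

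I would handle this by observing that $D$ is defined purely as a weighted count over fixed leaf sets, so the monotonicity argument goes through regardless. If the paper also needs the equivalence $D=0\iff\mathrm{SAT}$ to persist, I would read each negated predicate as a positive leaf (for example, ``documented contraindication''). That makes $\Phi_G$ monotone in evidence, and I would state this as the implicit reading of guarded Boolean structure.
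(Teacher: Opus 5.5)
Your proof is correct and follows the paper's own argument. Assumption~\ref{as:static} fixes $\mathcal{A}(G_K)$ and the weights, and Assumption~\ref{as:mono} with the existential definition of leaf satisfaction gives $\mathrm{sat}_{F_t}(v)\le\mathrm{sat}_{F_{t+1}}(v)$ leafwise; so each frontier's unmet cost, and hence the minimum $D$, is non-increasing, and, as you note, the paper's proof likewise never invokes Assumption~\ref{as:sound}. One small patch: your minimizer $A^\ast$ exists because $\mathcal{A}(G_K)$ is finite and nonempty (not merely because the leaf set is finite). If $\mathcal{A}(G_K)$ could be empty, then $D=+\infty$ in both rounds and the inequality holds trivially.
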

\begin{proof}
By Assumption~\ref{as:static} the frontier collection $\mathcal{A}(G_K)$ is fixed. By Assumption~\ref{as:mono}, $F_t\subseteq F_{t+1}$, and since anchoring is inclusion-monotone---adding facts can turn a leaf's satisfaction from $0$ to $1$ but never from $1$ to $0$---we have $\mathrm{sat}_{F_t}(v)\le\mathrm{sat}_{F_{t+1}}(v)$ for every leaf $v$. Hence for every frontier $A$ the unmet cost $\sum_{v\in A,\,\mathrm{sat}=0} w_v$ is non-increasing in $t$, and so is its minimum over frontiers, which is exactly $D$.
\end{proof}

\begin{theorem}[Termination and fixed point]\label{thm:term}
Under Assumptions~\ref{as:static}--\ref{as:sound}, with integer-scaled weights and minimum weight $w_{\min}$, AGVF performs at most $N$ refinement rounds. The number of strictly deficiency-reducing rounds is at most $\lceil D(s_0)/w_{\min}\rceil$. If the run halts before the budget is exhausted, it terminates in a trajectory $\tau^{*}$ satisfying either \emph{(i)} $D(\tau^{*})=0$---the appeal anchors a complete satisfying frontier of the medical-necessity criterion---or \emph{(ii)} $D(\tau^{*})>0$ with the remaining open obligations unsupported by any admissible evidence returned by the configured evidence source.
\end{theorem}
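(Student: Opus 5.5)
The theorem has three claims, and I would prove them in order.

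\textbf{Round bound.} The bound of $N$ rounds is immediate from the protocol in \S\ref{ssec:protocol}. The loop guard $t<N$ is checked before every round, so the round counter cannot exceed the budget. Nothing about $D$ is needed here.

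\textbf{Counting strictly reducing rounds.} I would use Lemma~\ref{lem:mono} together with the integrality assumption. Since every $w_v$ is an integer multiple of a common unit, $D(s_t)$ lies in a discrete set. Any strict decrease $D(s_{t+1})<D(s_t)$ must then be by at least one unit, and I would like this unit to be $w_{\min}$.

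This is the delicate step. A strict drop in $D$ means the minimizing frontier's unmet cost fell. The new minimizer $A'$ has cost $\sum_{v\in A',\mathrm{sat}_{F_{t+1}}(v)=0}w_v<D(s_t)$. Its cost at time $t$ was at least $D(s_t)$, and by inclusion-monotonicity its unmet set at $t+1$ is contained in its unmet set at $t$. So the difference is the weight of a nonempty set of newly satisfied leaves, which is at least $w_{\min}$.

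It follows that every strictly reducing round lowers $D$ by at least $w_{\min}$. Since $D\ge 0$ and $D$ never increases, there are at most $D(s_0)/w_{\min}$ such rounds, hence at most $\lceil D(s_0)/w_{\min}\rceil$. With integer scaling I need only the per-drop lower bound; if integrality were dropped, this is exactly where it would fail.

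\textbf{Halting dichotomy.} Suppose the run halts with $t<N$. By the loop rule, $W_t=\emptyset$, and by Assumption~\ref{as:sound} this gives $\Delta(s_t)=W_t=\emptyset$. The definition of $\Delta$ as the unsatisfied leaves of the argmin frontier then says that this frontier is fully satisfied. Hence $D(s_t)=0$, which is case (i), and $\mathrm{SAT}_{F_t}(G_K)=1$ by the remark after \eqref{eq:deficiency}.

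Read literally, then, Assumption~\ref{as:sound} forces (i) whenever the halt comes from $W_t=\emptyset$. Case (ii) can only arise through an additional halting condition I would make explicit. This is the main obstacle: reconciling the statement with the protocol. I would handle it by also admitting a stall halt, namely a round in which Retrieval on the contested set $W_t=\Delta(s_t)$ returns $F'$ with no fact anchoring any $v\in W_t$.

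For such a halt I would argue as follows. Retrieval was queried precisely on every open leaf of the cheapest frontier. By the configured source's exact-key semantics, an empty anchoring result certifies that no admissible evidence for those leaves exists in the pool. That is exactly (ii): $D>0$, with the residual obligations unsupported by the evidence source.

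Finally, I would note that each non-halting round either strictly reduces $D$ or is a stall. Combined with the counting step, this shows that any non-budget termination is localized to one of the two outcomes.
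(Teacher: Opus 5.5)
Your round bound and your halting dichotomy are fine and agree with the paper. This includes your observation that case~(ii) needs a stall halt (Retrieval returns nothing anchoring any contested obligation), which is exactly the second branch of the paper's case split.

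The gap is in the counting step. Write $\mathrm{cost}_t(A)=\sum_{v\in A,\,\mathrm{sat}_{F_t}(v)=0}w_v$. You know $\mathrm{cost}_t(A')\ge D(s_t)$ and $\mathrm{cost}_t(A')-\mathrm{cost}_{t+1}(A')\ge w_{\min}$. But the drop in $D$ is $D(s_t)-\mathrm{cost}_{t+1}(A')\le \mathrm{cost}_t(A')-\mathrm{cost}_{t+1}(A')$, so the inequality runs the wrong way and ``every strict drop is at least $w_{\min}$'' does not follow.

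The claim is in fact false. Take $\Phi_G=x\vee(y\wedge z)$ with $w_x=3$, $w_y=w_z=2$. Then $D=3$ via $\{x\}$, and anchoring $y$ lowers $D$ to $2$, a strict drop of $1<w_{\min}$.

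Left unrestricted, the count bound itself fails. Take $\Phi_G=x\vee\bigvee_{j=1}^{3}(y_j\wedge z_j)$ with $w_x=6$, $w_{y_j}=3$, $w_{z_j}=6-j$. Anchoring $y_1,y_2,y_3,z_3$ in turn gives $D=6\to5\to4\to3\to0$: four strict drops against $\lceil 6/3\rceil=2$.

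Integrality does not rescue this. It only quantizes drops to one unit, not to $w_{\min}$, and your argument never actually uses it.

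The missing ingredient is that Retrieval is \emph{targeted}, so the progress argument must go through the \emph{old} minimizer $A^*_t$, not the new one. In any round that does not stall, some fact anchors a leaf $v\in W_t$. By Assumption~\ref{as:sound}, $W_t=\Delta(s_t)$, which by definition consists of open leaves of $A^*_t$. Hence
\[
D(s_{t+1})\le\mathrm{cost}_{t+1}(A^*_t)\le\mathrm{cost}_t(A^*_t)-w_v\le D(s_t)-w_{\min},
\]
regardless of what else was anchored in that round. This is precisely the paper's step ``Retrieval closes at least one obligation on the current cheapest frontier, so $D$ strictly decreases by at least $w_{\min}$.''

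The small drops in the examples above all come from anchoring leaves that lie off the current cheapest frontier, which targeted retrieval excludes. Your own stall-halt rule already guarantees that every non-halting round anchors some $v\in W_t$. So replacing your new-minimizer argument with this old-minimizer bound closes the gap, and the corrected step needs no integrality assumption at all.
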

\begin{proof}
Consider each round. If $W_t\neq\emptyset$ and Retrieval closes at least one obligation on the current cheapest frontier, then at least one such leaf flips $0\to1$, so by Lemma~\ref{lem:mono} $D$ strictly decreases by at least $w_{\min}$. Since $D$ is bounded below by $0$ and falls in quanta of at least $w_{\min}$, this can occur at most $\lceil D(s_0)/w_{\min}\rceil$ times. Otherwise either $W_t=\emptyset$ (the Critic exposes no open obligation, giving case (i) with $D=0$ under Assumption~\ref{as:sound}), or $W_t\neq\emptyset$ but Retrieval returns no new admissible fact for any contested obligation, yielding an evidentiary gap with respect to the configured evidence source (case (ii)). In all cases, the loop guard $t<N$ caps the number of refinement rounds by $N$.
\end{proof}

The dichotomy in Theorem~\ref{thm:term} is the theoretical crux of AGVF: the framework never closes a gap by fabricating evidence. Residual deficiency is surfaced to the Human-in-the-Loop reviewer as an honest, localized list of missing clinical documentation rather than being papered over---precisely the failure mode of the naive pipeline of \S1.2.

\subsection{Citation-Grounding Guarantee}\label{ssec:zero}
The unsupported-assertion constraint $\mathcal{C}_{\text{unsupported}}(\tau)=0$ of the CMDP is enforced not as a soft penalty but as a hard, deterministic citation-admissibility gate inside the Synthesis operator.

\begin{definition}[Citation-grounded assertion]
An assertion $c$ is \emph{citation-grounded} iff $\varphi(c)\neq\emptyset$ and every $f\in\varphi(c)$ lies in $\mathcal{F}_P$ (is a verified patient fact with a source locator). Define the citation-grounding violation count $C_{\text{cite}}(s)=|\{c\in\delta_s : c \text{ is not citation-grounded}\}|$.
\end{definition}

The Gated Synthesis operator applies a filter $\mathbb{G}$ that emits an assertion $c$ only if $c$ is citation-grounded; candidate assertions without admissible citations are dropped before the draft is formed. This is a syntactic admissibility guarantee. It should not be confused with full semantic entailment: a separate verifier is needed to prove that the wording of a cited assertion is completely entailed by the cited span.

\begin{lemma}[Citation-grounding invariant]\label{lem:ground}
For every state $s$ reachable under the gated Synthesis operator, $C_{\text{cite}}(s)=0$.
\end{lemma}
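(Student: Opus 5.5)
The natural approach is induction on the length of the trajectory that reaches $s$. Take the set of reachable states to be the least set closed under the operators of \S\ref{ssec:protocol}, starting from the initial state $s_0$. The claim then becomes an invariant: it holds at $s_0$, and every operator either leaves the draft unchanged or replaces it with an output of the gate $\mathbb{G}$.

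\textbf{Base case.} At $t=0$ Ingestion only produces $G_K$, and no synthesis has occurred yet. So $\delta_{s_0}=\emptyset$, and $C_{\text{cite}}(s_0)=|\emptyset|=0$. If the implementation allows a nonempty seed draft, I would require that this seed is also passed through $\mathbb{G}$, so the same argument applies.

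\textbf{Inductive step.} Assume $C_{\text{cite}}(s_t)=0$ and case-split on the operator applied.
\begin{itemize}
\item Ingestion, Retrieval, Gap analysis and Critique do not touch $\delta$. Retrieval changes only $F$, Gap analysis and Critique only compute $\Delta$ and $W_t$, and $G_K$ is fixed. The draft therefore keeps its old, grounded assertions.
\item Synthesis outputs $\delta_{t+1}=\mathbb{G}(\text{candidates})$. By the definition of $\mathbb{G}$, every emitted $c$ satisfies $\varphi(c)\neq\emptyset$ and $\varphi(c)\subseteq\mathcal{F}_P$. So no element of $\delta_{t+1}$ is ungrounded, and $C_{\text{cite}}(s_{t+1})=0$.
\end{itemize}

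The inductive step must also handle the case where synthesis is cumulative, i.e.\ $\delta_{t+1}=\delta_t\cup\mathbb{G}(\cdot)$. Here I would use that grounding of an already-admitted assertion cannot be revoked. Grounding refers only to the fixed set $\mathcal{F}_P$, not to the current $F_t$, so the old assertions remain grounded. This holds even without Assumption~\ref{as:mono}. The count is then the count over the old assertions (zero by hypothesis) plus the count over the new ones (zero by the gate), which is $0$.

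The main obstacle is not computational but definitional. The argument needs the draft to be writable only through $\mathbb{G}$, the write-path discipline, and it needs $\mathcal{F}_P$ to be fixed within an episode. I would state both explicitly as properties of the protocol and note that the lemma holds without Assumptions~\ref{as:static}--\ref{as:sound}. I would also emphasize that the guarantee is syntactic, consistent with the remark preceding the lemma.
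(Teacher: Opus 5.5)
Your proposal is correct and takes essentially the same approach as the paper. The paper's proof simply observes that $\delta_s$ contains only assertions that passed the deterministic gate $\mathbb{G}$, so no citation-ungrounded assertion exists at any reachable state, and adds that because $\mathbb{G}$ is deterministic the guarantee holds with probability one regardless of LLM sampling. Your induction (empty or gated initial draft, non-synthesis operators leaving $\delta$ unchanged, grounding fixed relative to $\mathcal{F}_P$ so cumulative drafts stay grounded) makes explicit the write-path discipline the paper treats as immediate.
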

\begin{proof}
Immediate from the gate: $\delta_s$ contains only assertions that passed $\mathbb{G}$, each citation-grounded by definition, so the set of citation-ungrounded assertions is empty at every reachable state. Because $\mathbb{G}$ is a deterministic predicate over $\mathcal{F}_P$---not a probabilistic scorer---the guarantee holds with probability one, independent of the underlying LLM's sampling.
\end{proof}

Because Lemma~\ref{lem:ground} holds at every reachable state, it holds at the terminal $\tau^{*}$: $\mathcal{C}_{\text{unsupported}}(\tau^{*})=0$ for the implemented citation-admissibility predicate, discharging the corresponding CMDP constraint by construction rather than in expectation. Regulatory-template validation is a separate deterministic checklist interface in the reference implementation; it is not evaluated as a statutory compliance guarantee in Section~\ref{sec:validation}.

\subsection{Optimality Characterization}\label{ssec:opt}
Combining the above: among all trajectories that (a) respect the grounding invariant of Lemma~\ref{lem:ground} and (b) are reachable from $s_0$ under monotone retrieval over $\mathcal{F}_P$, $\tau^{*}$ attains the minimum achievable deficiency. We add one mild link between coverage and reward.
\begin{assumption}[Coverage monotonicity]\label{as:cov}
$P(\text{Overturn}\mid s)$ is non-decreasing as satisfied obligations on a viable frontier accrue; equivalently, the reward $\mathcal{R}(s)$ is non-increasing in $D(s)$.
\end{assumption}
Under Assumption~\ref{as:cov}, minimizing $D$ is a formal surrogate for the CMDP reward $\mathcal{R}(s)=P(\text{Overturn}\mid s)$. Hence $\tau^{*}$ is a constrained optimum of objective~(3) only with respect to the evidence made reachable by the configured retrieval process and the implemented admissibility gates. We do not claim that this proves real-world overturn probability. AGVF thereby converts an intractable stochastic optimization over free-form text into a monotone, finite, evidence-bounded search whose admissibility properties are explicit and auditable.

\section{System Implementation}\label{sec:impl}
This section describes how the theory of Section~\ref{sec:theory} becomes a running reference system. We focus on agent orchestration---how control passes between agents, how state is shared, and where the deterministic gates sit in the message flow. The design is vendor-neutral: AGVF can be instantiated on any LLM provider, retriever, and workflow runtime that satisfy the interface contracts below. The implementation evaluated in Section~\ref{sec:validation}, however, is intentionally narrower: hand-formalized policy graphs, synthetic oracle-labeled evidence pools, exact-key retrieval, a deterministic mock LLM, and a sound deterministic critic.

\subsection{Orchestration Model}
AGVF is coordinated by a thin orchestrator that owns the dossier state $s=(F_s,G_K,\delta_s)$ and advances it through the refinement loop of \S\ref{ssec:protocol}. The orchestrator holds minimal domain logic and instead sequences calls to the five agents, each exposed behind a narrow, typed interface. This separation keeps the convergence and citation-grounding invariants of \S\ref{ssec:conv}--\S\ref{ssec:zero} auditable, because every state transition is mediated by the orchestrator and recorded, rather than emerging from opaque agent-to-agent chatter.

\paragraph{Blackboard state.} Agents do not message one another directly; they operate through a shared, versioned dossier held by the orchestrator (a blackboard pattern). Each agent returns a typed artifact---verified facts, a deficiency set, a gated draft, or a weakness set---which the orchestrator merges. This makes the monotone-evidence assumption (Assumption~\ref{as:mono}) enforceable in code: the merge step is append-only for verified facts, so no agent can retract a previously grounded fact mid-episode.

\paragraph{Round controller.} A round controller implements the loop guard ``while $W_t\neq\emptyset$ and $t<N$.'' It maintains the iteration counter $t$ against the budget $N$, detects the fixed point (an empty weakness set, or a round that adds no new admissible facts), and on termination exposes the remaining open obligations for Human-in-the-Loop (HITL) review. The controller is the single place where the termination conditions of Theorem~\ref{thm:term} are operationalized.

\begin{figure}[tbp]
  \centering
  \includegraphics[width=0.86\linewidth]{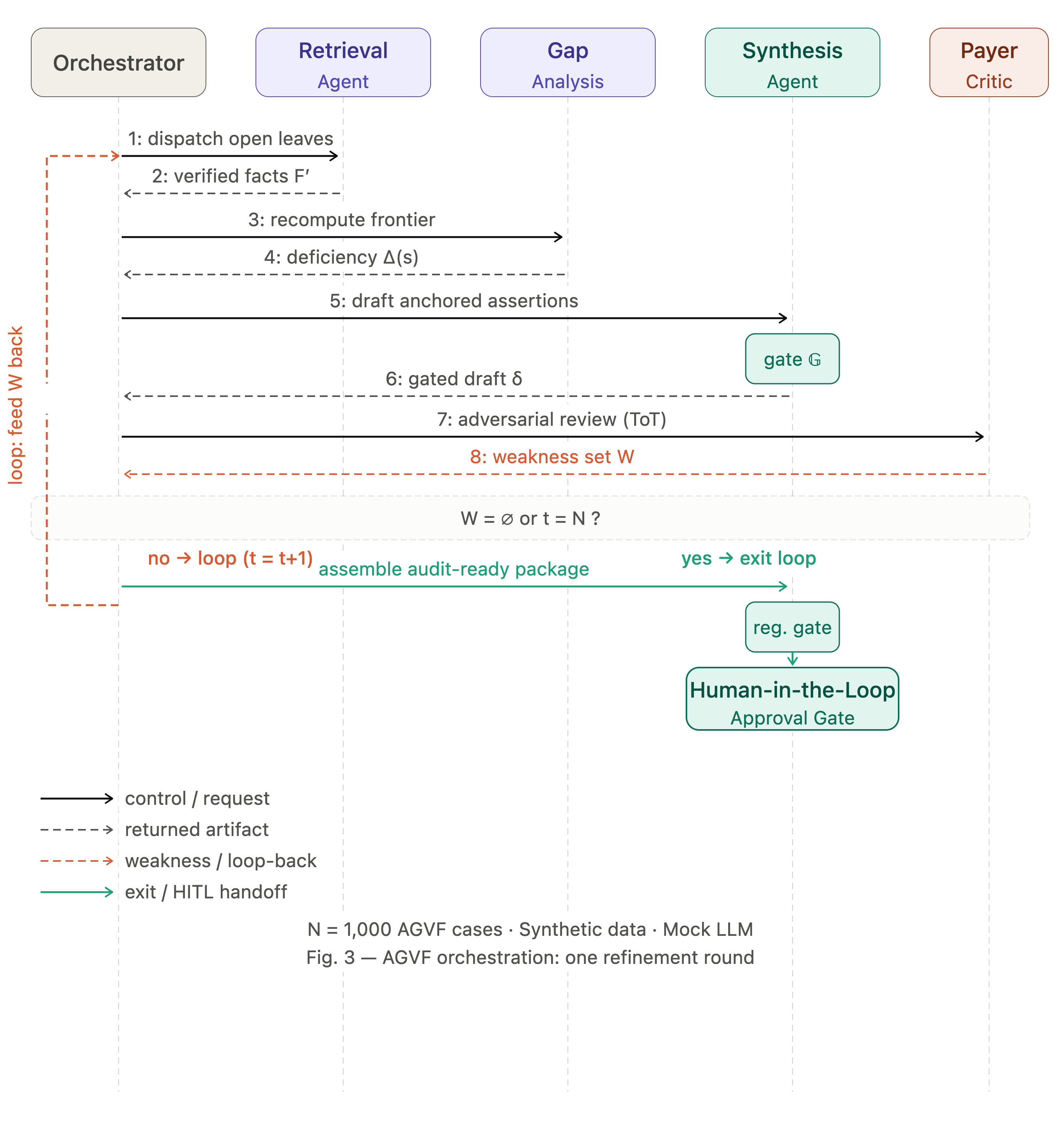}
  \caption{Message flow for a single AGVF refinement round. The orchestrator mediates every exchange; solid arrows are control/request messages and dashed arrows are returned artifacts. The deterministic citation gate $\mathbb{G}$ sits inside the Synthesis Agent, and an empty weakness set (or exhausting the budget $N$) exits the loop to HITL review.}
  \Description{Sequence-style diagram in which an orchestrator calls policy, retrieval, gap-analysis, synthesis, and critic agents, receives typed artifacts, and routes either another refinement round or a human review handoff.}
  \label{fig:orchestration}
\end{figure}

\subsection{Agent Interfaces and Interaction Contracts}
Each agent is a self-contained service with a single responsibility and an explicit pre/post-condition contract. The contracts are what let the orchestrator reason about progress without inspecting agent internals:
\begin{itemize}
  \item \textbf{Policy Ingestion} consumes a policy specification and emits the constraint graph $G_K$ with a stable identifier for every leaf predicate. Post-condition: $G_K$ is acyclic and every leaf carries a machine-checkable predicate. Invoked once per episode (Assumption~\ref{as:static}). The reference implementation loads hand-authored JSON graphs.
  \item \textbf{Retrieval} consumes the open-leaf set and the patient evidence pool and emits candidate facts with provenance pointers. Post-condition: every returned fact carries a source locator, so anchoring can be verified downstream. The evaluated synthetic mode uses exact-key matching; dense, sparse, hierarchical, or graph-traversal back-ends can satisfy the same interface in future real-LLM experiments.
  \item \textbf{Gap Analysis} consumes $G_K$ and current evidence and emits the cheapest-frontier deficiency set $\Delta(s)$. Post-condition: $\Delta(s)\subseteq$ open leaves, giving the orchestrator a monotone progress signal.
  \item \textbf{Synthesis} consumes $\Delta(s)$ and anchored evidence and emits a draft in which every assertion has passed the deterministic citation-grounding gate. Post-condition: the returned draft satisfies the citation-grounding invariant (Lemma~\ref{lem:ground}) before it leaves the agent boundary.
  \item \textbf{Payer Critic} consumes the gated draft and $G_K$ and emits a weakness set $W=\Delta(s)$ in the deterministic validation. For a learned or LLM-backed critic, the orchestrator must enforce soundness and completeness checks before the theorem applies.
\end{itemize}
Because each contract is checkable at the orchestrator boundary, a misbehaving agent (e.g., a Critic that violates soundness or hides open obligations) can be caught before it silently corrupts the loop---an operational safeguard for the assumptions the proofs rely on.

\subsection{Placement of the Deterministic Gates}
A central implementation decision is where the hard constraints live. AGVF places admissibility checks at agent boundaries, not in a post-hoc review pass. The citation-grounding gate $\mathbb{G}$ is embedded inside the Synthesis Agent, so assertions without admissible citations are never emitted into shared state. The reference implementation also exposes a regulatory-template checker, but Section~\ref{sec:validation} does not evaluate statutory compliance because no jurisdiction-specific checklist is populated. Situating the citation gate on the write path---rather than as an advisory score---is what makes the implemented citation-grounding constraint hold with probability one rather than in expectation.

\subsection{Failure Handling and Observability}
The orchestration layer treats agent calls as typed state transitions. Retrieval and critique steps are idempotent with respect to the dossier, so a production deployment can retry transient failures without double-counting evidence. The current reference implementation records an in-memory append-only trace keyed by dossier version: which agent ran, what summary delta it produced, and which policy obligations remain open. Durable trace persistence, source-span rendering, and operational HITL workflow integration are engineering extensions needed before deployment.

\subsection{Reference Implementation Fidelity}
Table~\ref{tab:fidelity} summarizes what the released artifact implements versus what remains an extensibility point.
\begin{table}[tbp]
  \centering
  \caption{Reference implementation fidelity relative to the AGVF architecture.}
  \label{tab:fidelity}
  \begin{tabular}{p{0.28\linewidth}p{0.31\linewidth}p{0.31\linewidth}}
    \toprule
    Component & Implemented in reported validation & Future production extension \\
    \midrule
    Policy graph & Hand-authored JSON Boolean graphs & Automated policy-to-graph extraction and expert validation \\
    Retrieval & Exact-key lookup over synthetic oracle-labeled pools & Dense, sparse, hierarchical, or graph retrieval with span verification \\
    Synthesis & Mock LLM text plus deterministic citation gate & Real LLM synthesis with semantic entailment verification \\
    Critic & Deterministic gap critic returning $\Delta(s)$ & Role-specialized LLM adversarial critic with boundary checks \\
    Regulatory checks & Checklist interface only & Jurisdiction-specific statutory template validation \\
    Audit trace & In-memory transition trace & Durable evidence-span audit package and HITL workflow \\
    \bottomrule
  \end{tabular}
\end{table}

\section{System Validation}\label{sec:validation}
This section validates the \emph{reference implementation} of AGVF: it confirms that the running system faithfully upholds the invariants proved in Section~\ref{sec:theory}. We emphasize at the outset what this section does and does not establish. The citation-grounding and monotone-deficiency properties hold \emph{by construction} (Lemmas~\ref{lem:mono}--\ref{lem:ground}, Theorem~\ref{thm:term}); the purpose of the study is therefore to verify that the implementation honors them and to quantify the effect of removing the mechanisms that enforce them. This is a validation of implementation fidelity on a controlled synthetic benchmark, not a demonstration of real-world efficacy---we return to that distinction in Section~\ref{ssec:threats}.

\subsection{Benchmark Design and Experimental Setup}
We evaluate on 1{,}000 synthetic appeal cases, 250 for each of four hand-formalized policy graphs spanning common administrative denial families: CO-50 General (6 leaf obligations), CO-197 Prior Authorization (9), CO-50 Surgical (10), and CO-50 DME (9). Each synthetic case contains an oracle-labeled evidence pool keyed to policy obligations, including missing-obligation cases and exclusion-blocked cases. Each case is processed by four arms: the full AGVF loop; a single-pass Retrieval-Augmented Generation baseline (RAG) that surfaces all available evidence at once; and two ablations---AGVF$-$Critic (critic removed) and AGVF$-$Gates (deterministic citation gate removed). To stress the citation gate, the AGVF$-$Gates arm injects two fabricated, unsupported assertions per case. All runs use a deterministic mock language model, a maximum of $N=8$ refinement rounds, and a retrieval reveal limit of one obligation per round; the seed is fixed at 42 for reproducibility.

\subsection{Data and Provenance}
No real patient records are used. Cases are generated synthetically; to make their clinical coding statistically realistic, the generator samples diagnosis codes, procedure codes, DRG, length-of-stay, and charge values from aggregate distributions extracted from the de-identified \emph{Hospital Inpatient Discharges (SPARCS De-Identified): 2024} dataset published by the New York State Department of Health. Only aggregate frequency distributions were used (profiled over 2{,}196{,}737 discharge records); no SPARCS row appears in the evaluation set, and no attempt was made to re-identify individuals or to link SPARCS with any other data. The denial framing, policy constraint graphs, clinical evidence, and obligation-anchoring are entirely synthetic. SPARCS parameterization affects only the surface realism of case attributes; it does not alter the obligation-satisfaction logic or the formal properties under test.

\subsection{Citation-Grounding}
Across all 1{,}000 AGVF cases the mean number of citation-ungrounded assertions is exactly $0.000$, and the citation-grounding violation rate is $0.0\%$---every emitted assertion cites at least one verified fact, as Lemma~\ref{lem:ground} requires (Figure~\ref{fig:grounding}). The RAG baseline and the AGVF$-$Critic arm likewise show no violations, because neither disables the gate. The AGVF$-$Gates arm, with the gate removed and two fabricated unsupported assertions injected per case, shows exactly $2.000$ citation-ungrounded assertions per case and a $100\%$ grounding-violation rate. The contrast is categorical rather than incremental: with the deterministic gate in place, citation-free or invalidly cited assertions cannot enter shared state; without it, every injected case is contaminated. This result confirms that the gate is load-bearing for the implemented admissibility guarantee.

\begin{figure}[tbp]
  \centering
  \includegraphics[width=\linewidth]{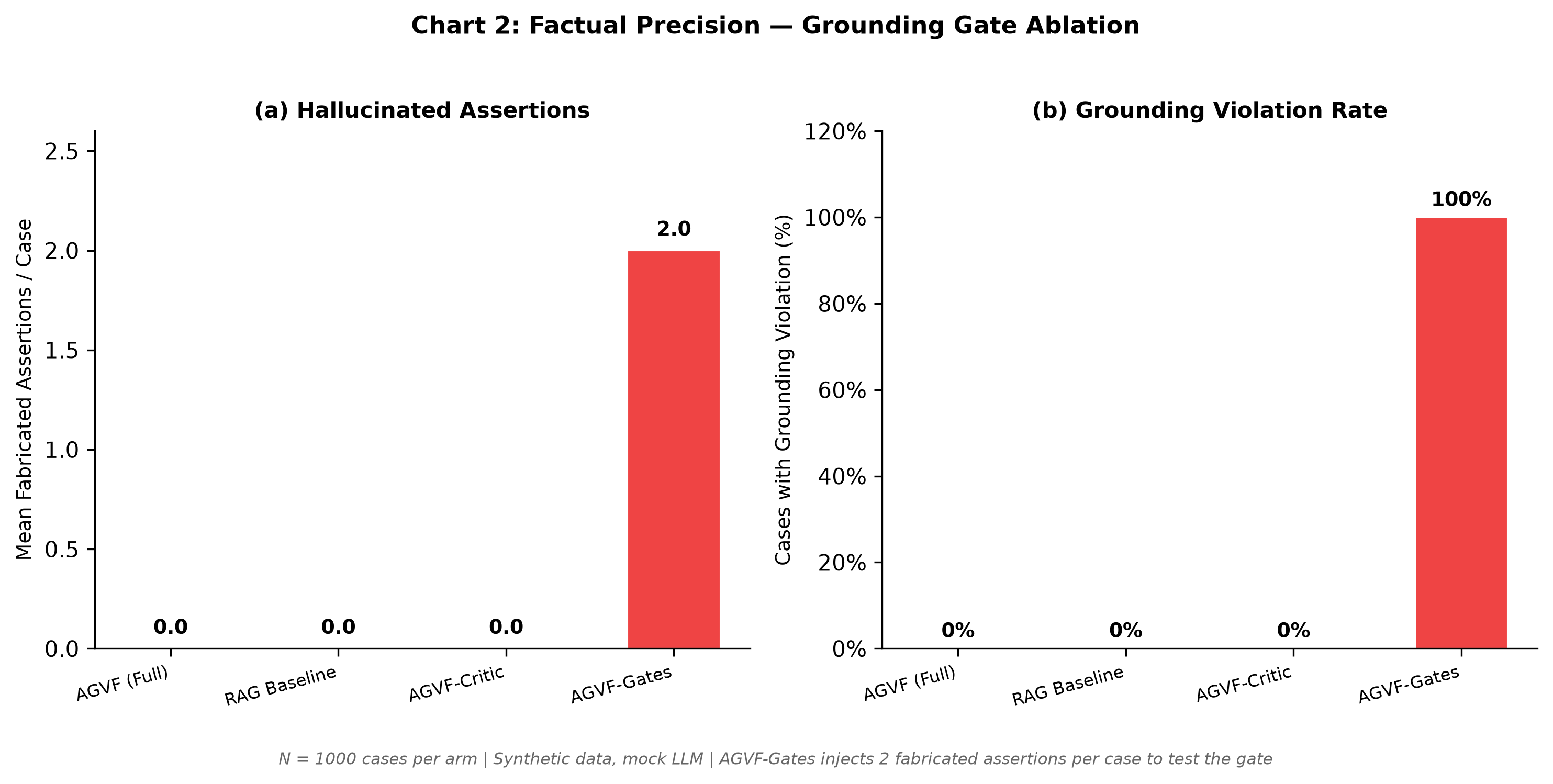}
  \caption{Citation-grounding gate ablation across 1{,}000 synthetic cases per arm. (a) mean fabricated unsupported assertions per case; (b) share of cases with any citation-grounding violation. AGVF, RAG, and AGVF$-$Critic show zero violations; removing the gate (AGVF$-$Gates) yields 2.0 fabricated assertions/case and a 100\% violation rate. Synthetic data, mock LLM.}
  \Description{Two-panel bar chart showing zero citation-grounding violations for AGVF, RAG, and AGVF without critic, and complete violation under the no-gate ablation with two fabricated assertions per case.}
  \label{fig:grounding}
\end{figure}

\subsection{Deficiency Convergence}
Figure~\ref{fig:convergence} traces the deficiency $D(s)$ across refinement rounds over all 1{,}000 AGVF cases. The mean deficiency falls monotonically from $11.1$ at initialization to $1.68$ at the final state, and---critically---$D(s)$ is non-increasing at every step of every case: the monotone-convergence rate is $100.0\%$, matching Theorem~\ref{thm:term} under the stated assumptions. The interquartile and 10th--90th-percentile bands narrow as rounds proceed, showing the behavior is consistent across cases and policies, not an averaging artifact. By contrast, the single-pass RAG baseline violates monotonicity in $4.0\%$ of cases (40 cases): by surfacing all evidence blindly in one pass, it can introduce exclusion evidence that drives $D$ from a finite value to infinity---a formally worse state in the benchmark. This illustrates that the invariant is a property of the disciplined refinement loop, not of retrieval in general.

\begin{figure}[tbp]
  \centering
  \includegraphics[width=\linewidth]{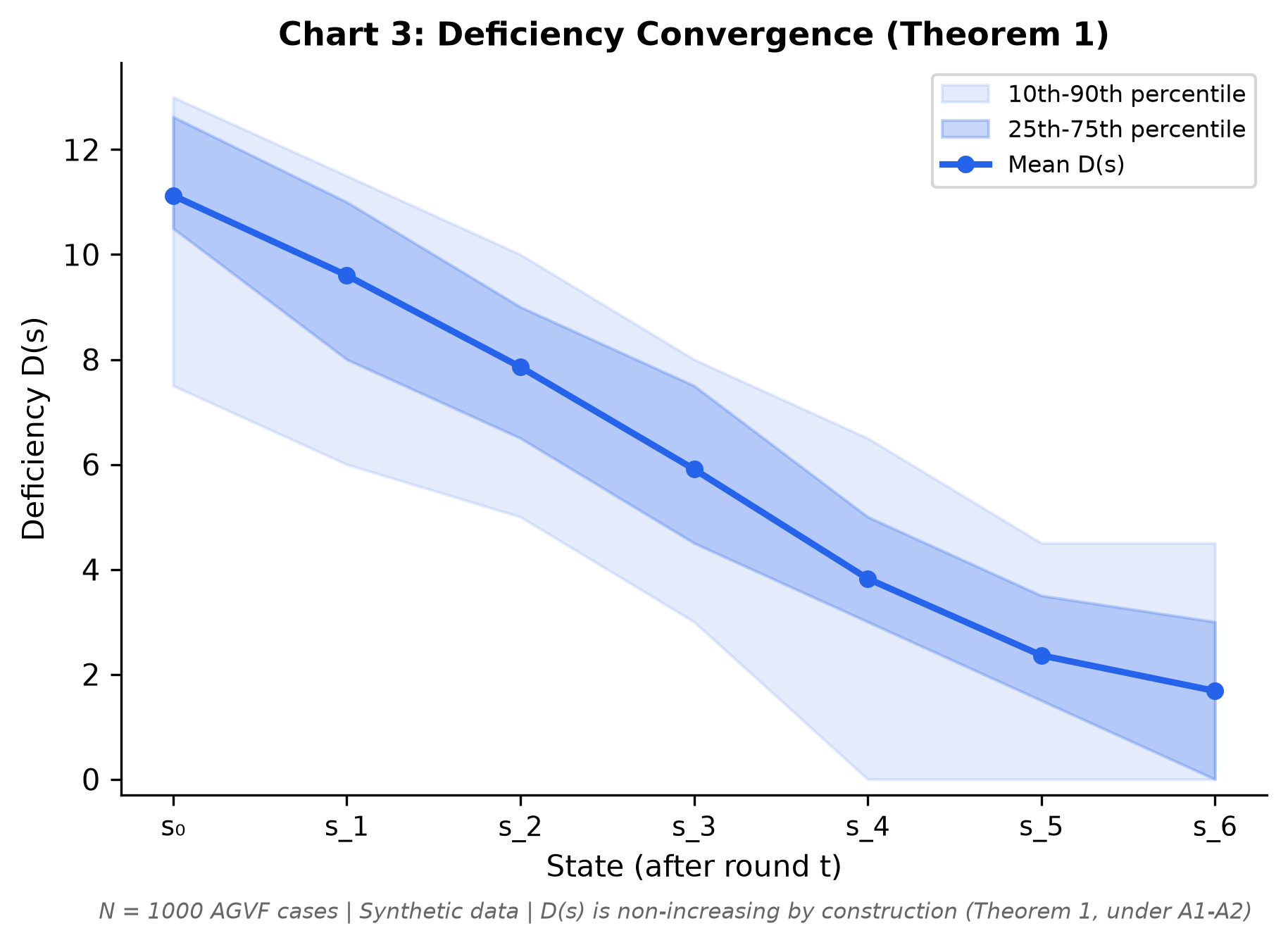}
  \caption{Deficiency $D(s)$ by refinement round over 1{,}000 AGVF cases, with 25th--75th and 10th--90th percentile bands. Mean $D$ falls $11.1 \rightarrow 1.68$; $D(s)$ is non-increasing in 100\% of cases (Theorem~\ref{thm:term}, under Assumptions~\ref{as:static}--\ref{as:sound}). Synthetic data.}
  \Description{Line chart with uncertainty bands showing mean deficiency decreasing monotonically over AGVF refinement rounds across the synthetic benchmark.}
  \label{fig:convergence}
\end{figure}

\subsection{Termination Behavior}
Figure~\ref{fig:termination} shows how episodes terminate. Under AGVF, $39.6\%$ of cases reach $D=0$ (a complete, fully anchored satisfying frontier) and the remaining $60.4\%$ terminate at a finite positive deficiency---an \emph{evidentiary gap}, surfaced to the human reviewer as a localized list of missing obligations. No AGVF case terminates in the infeasible $D=\infty$ state. This is the operational expression of Theorem~\ref{thm:term}'s dichotomy under the benchmark evidence source: the system either fully substantiates the criterion or reports what remains missing; it does not close a gap by fabrication. The RAG baseline, by contrast, produces 40 exclusion-blocked ($D=\infty$) cases, again reflecting the brittleness of undisciplined single-pass evidence surfacing in this synthetic setup.

\begin{figure}[tbp]
  \centering
  \includegraphics[width=\linewidth]{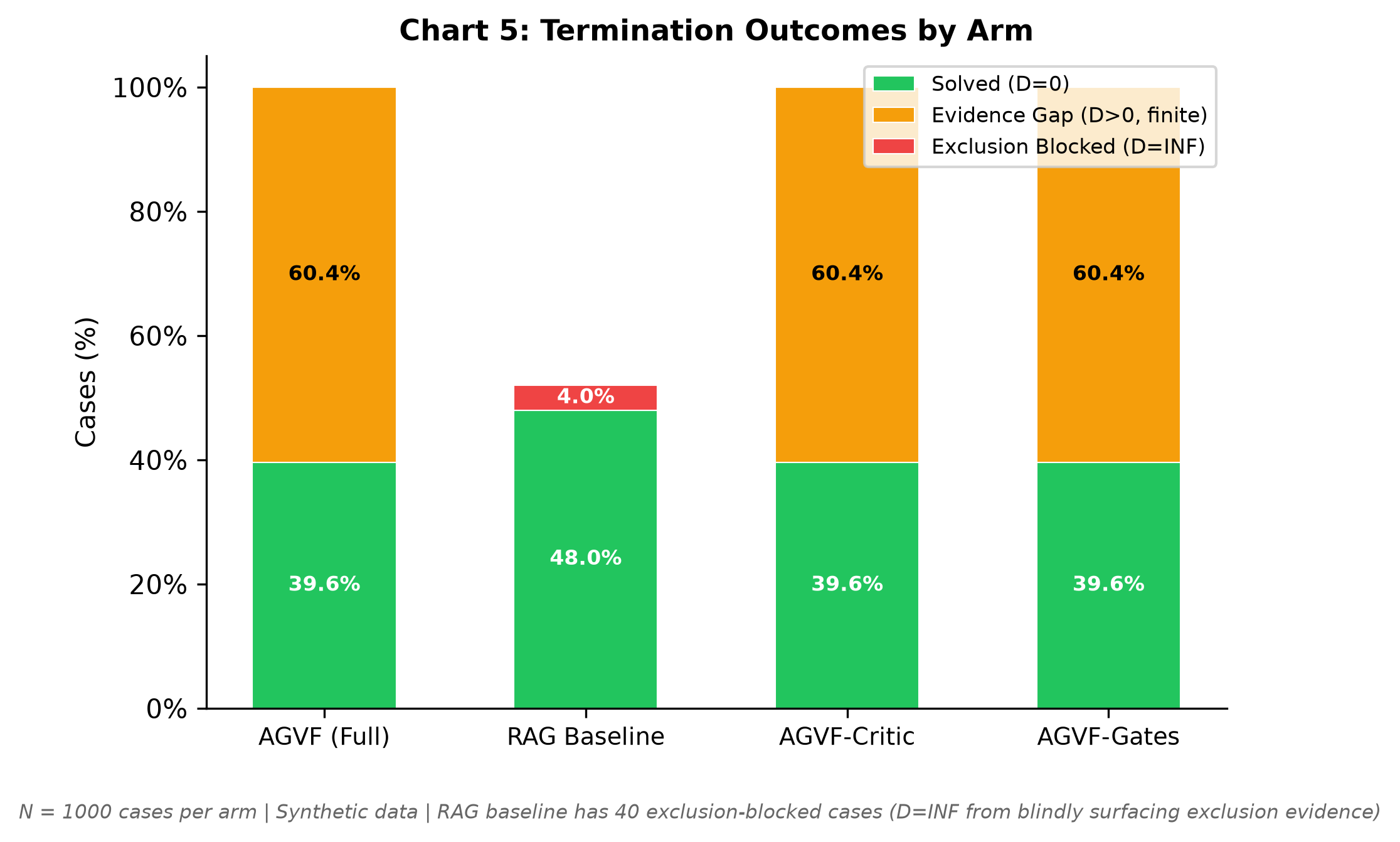}
  \caption{Termination outcomes per arm. AGVF: 39.6\% solved ($D=0$), 60.4\% finite evidence gaps, no infeasible states. RAG produces 40 exclusion-blocked ($D=\infty$) cases. Synthetic data.}
  \Description{Stacked bar chart comparing termination outcomes by arm, with AGVF split between solved and finite-gap cases and RAG including a small exclusion-blocked segment.}
  \label{fig:termination}
\end{figure}

\subsection{Ablation}
Figure~\ref{fig:ablation} isolates each component's contribution as the change relative to full AGVF. Removing the citation-grounding gate degrades the grounding-violation rate by $+100$ percentage points and adds $+2.0$ fabricated assertions per case, while leaving convergence and solved-rate untouched---the gate governs citation admissibility and nothing else, exactly as designed. Removing the critic produces no measurable change on this synthetic, mock-LLM configuration. We flag this plainly rather than obscure it: with a deterministic deficiency engine and a mock model, the critic's contested-obligation set coincides with the gap-analysis output, so the critic cannot be differentiated here. The critic's value is an empirical question that requires a real language model and semantically noisy cases; we defer that experiment to future work (Section~\ref{ssec:threats}).

\begin{figure}[tbp]
  \centering
  \includegraphics[width=\linewidth]{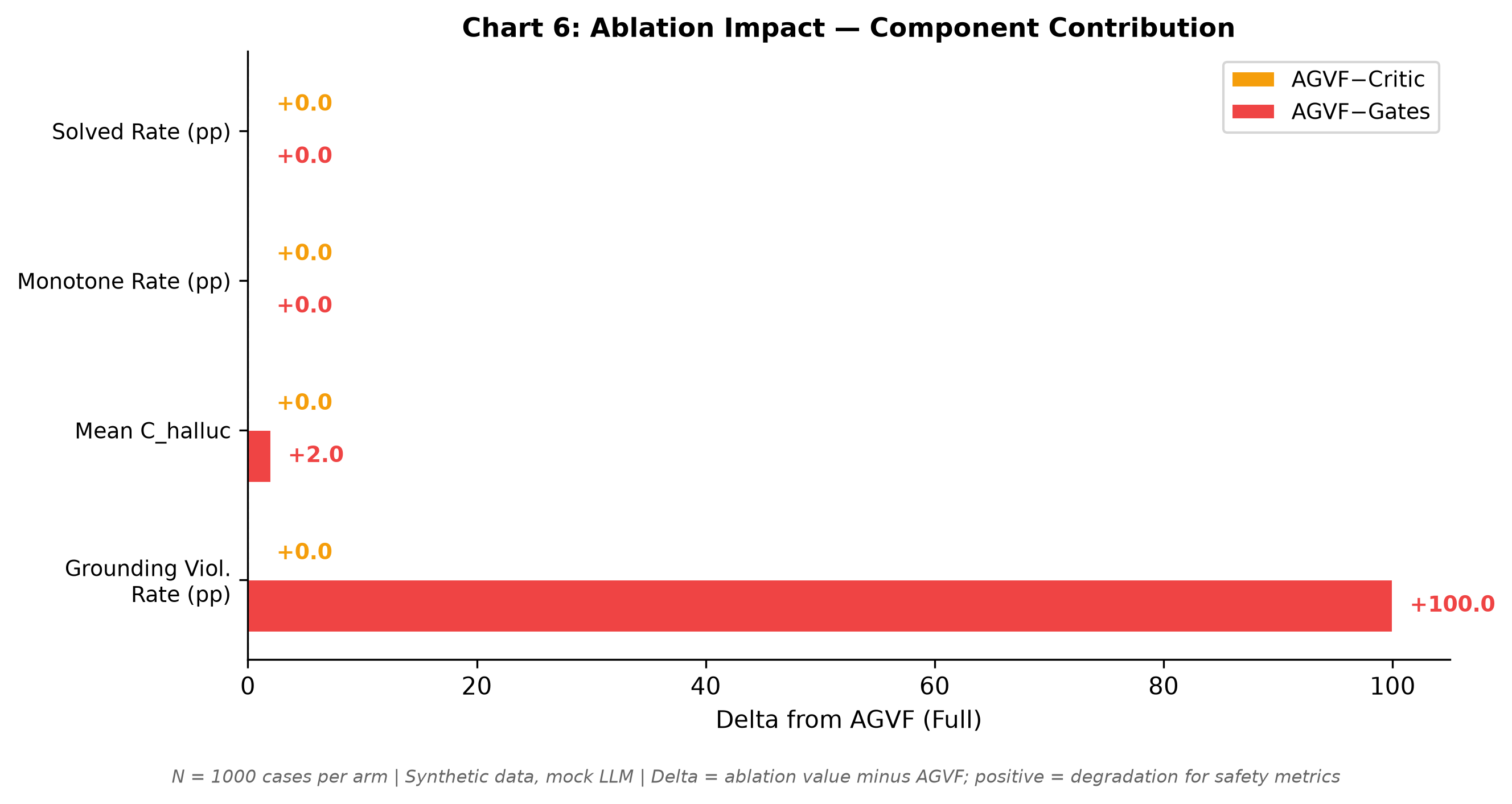}
  \caption{Component contribution as change from full AGVF. Gate removal: $+100$ pp citation-grounding violations, $+2.0$ fabricated assertions/case. Critic removal: no effect on this synthetic, mock-LLM configuration (expected---see \S\ref{sec:validation}, Ablation). Synthetic data.}
  \Description{Bar chart showing that removing the gate increases citation violations and fabricated assertions, while removing the critic has no measurable effect in the deterministic synthetic configuration.}
  \label{fig:ablation}
\end{figure}

\subsection{Threats to Validity}\label{ssec:threats}
We state the limitations directly, as they bound what the reader may conclude.

\noindent\textbf{Synthetic data.} No real denial cases or patient records are evaluated. Although attribute distributions are parameterized from aggregate de-identified discharge data, every case, denial framing, clinical evidence item, and obligation label is fabricated; the study therefore speaks to controlled system behavior, not clinical reality.

\noindent\textbf{LLM scope.} This study uses a deterministic stand-in rather than a production language model. The citation-grounding and convergence invariants are LLM-independent under the implemented gates, so they transfer at the control-logic level; but end-to-end LLM retrieval quality, adversarial critique, synthesis fluency, and semantic faithfulness are not fully exercised, and reported latencies reflect loop overhead rather than production model inference.

\noindent\textbf{Guarantees are constructive.} This section confirms the implementation is faithful to properties that are already proven. It does not, and cannot, show that AGVF improves appeal outcomes.

\noindent\textbf{Efficacy unmeasured.} We deliberately do not report an overturn-rate result. A credible efficacy measure requires clinician-adjudicated or payer-adjudicated outcomes on real de-identified cases, which we identify as the essential next study.

\noindent\textbf{Critic non-differentiable here.} As noted above, the synthetic/mock configuration cannot exercise the critic; its contribution remains an open empirical question for real LLM-backed experiments.

\noindent\textbf{Idealized assumptions.} The convergence guarantee relies on Assumptions~\ref{as:static} (a static, complete policy graph) and~\ref{as:sound} (a sound and complete critic/gap signal). Real policy ingestion is imperfect and a real critic may be unsound or incomplete; relaxing these is future work.

\noindent\textbf{Configuration-dependent descriptive metrics.} Solved-rate and round-count differences across arms are influenced by the one-obligation-per-round reveal limit and should be read as descriptive of loop behavior, not as efficacy comparisons.

\section{Conclusion and Future Work}
This paper presented AGVF, a multi-agent framework that turns healthcare appeal generation from unconstrained text production into constrained, evidence-bounded refinement. The contributions are mainly theoretical, architectural, and benchmark-oriented: appeal synthesis formalized as a Constrained Markov Decision Process over a policy constraint graph; a convergence result (Theorem~\ref{thm:term}) showing that refinement on that graph reduces deficiency monotonically and terminates in well-characterized states under explicit assumptions; and a construction (Lemma~\ref{lem:ground}) that prevents citation-ungrounded assertions from entering shared state. We accompanied the framework with a reference implementation and validated on 1{,}000 synthetic cases---parameterized from aggregate de-identified public discharge distributions---that the implementation upholds these invariants: zero citation-grounding violations across all AGVF cases, monotone convergence in every episode, and a deterministic gate that is the single load-bearing component for the implemented admissibility guarantee.

The central limitation of this work is equally its clearest direction forward: we validate that AGVF behaves as proven, but we do not establish that it improves real-world appeal outcomes. The guarantees are constructive, and the reported study uses synthetic data and a mock control-loop validation rather than real clinical adjudication. Three lines of future work follow directly.

\noindent\textbf{End-to-end live-LLM evaluation.} The immediate next study is to run a production language model for synthesis and critique inside the full AGVF loop over the same benchmark, adding noisy evidence, contradictory snippets, and realistic denial packets. The relevant metrics are unsupported-claim rate, citation validity, semantic faithfulness, critic-added corrections, residual deficiency, latency, and cost.

\noindent\textbf{Semantic verification.} The present gate verifies citation admissibility, not full textual entailment. A stronger verifier should map each generated claim to cited evidence spans and classify it as supported, contradicted, or not enough evidence. That extension would let the system report semantic unsupported-claim rates rather than only citation-grounding violations.

\noindent\textbf{Clinical efficacy evaluation.} A later study should evaluate real, de-identified denial cases with adjudicated outcomes---either payer decisions or blinded clinician ``would-overturn'' judgments---measuring appeal quality and overturn likelihood against manual and single-pass baselines. This is the evidence needed to move from ``the system is faithful to its guarantees'' to ``the system helps.'' We estimate a rigorously scored set of 50--150 cases per policy family as a credible starting scale.

\noindent\textbf{Relaxing the idealized assumptions.} The convergence guarantee rests on a static, complete policy graph (Assumption~\ref{as:static}) and a sound, complete critic/gap signal (Assumption~\ref{as:sound}). Real policy ingestion is imperfect and a learned critic may be unsound or incomplete. Extending the theory to dynamic graph discovery during refinement, and to bounded-error critics, would broaden AGVF's applicability while clarifying how the guarantees degrade under realistic conditions.

More broadly, AGVF illustrates a pattern that should apply beyond healthcare appeals: when an agentic LLM system must satisfy hard external constraints, encoding those constraints as a formal graph and enforcing admissibility through deterministic gates---instead of relying only on prompting or post-hoc review---turns an unbounded generation problem into a bounded, auditable one with explicit assumptions and testable invariants.

\section*{Availability}
A reference implementation of AGVF, including the synthetic case generator and evaluation harness, is available from the authors upon reasonable request.

\section*{Use of AI Tools}
The authors used AI-assisted tools during manuscript editing, code scaffolding, and artifact preparation. The named authors reviewed the manuscript, code, data-generation procedure, references, and reported results and take responsibility for the work.

\section*{Disclosure of Interests}
All authors are employed by Sliced Health, which develops commercial healthcare revenue-cycle software related to the subject of this paper. This work describes a research framework and reference implementation; the authors have no other competing interests to declare.

\bibliographystyle{ACM-Reference-Format}
\bibliography{references}

\end{document}